\newtheorem{lemma}{Lemma}
\newtheorem{theorem}{Theorem}
\newtheorem{proof}{Proof}
\newcommand{\R}{\mathbb{R}}
\newcommand{\E}{\mathbb{E}}
\newcommand{\F}{\mathcal{F}}
\newcommand{\g}{\gamma}
\newcommand\copyrighttext{%
  \footnotesize \textcopyright 2019 IEEE. Personal use is permitted, but republication/redistribution requires IEEE permission. This paper is accepted at IEEE Control Systems Letters
  DOI: 10.1109/LCSYS.2019.2921158}
\newcommand\copyrightnotice{%
\begin{tikzpicture}[remember picture,overlay]
\node[anchor=south,yshift=10pt] at (current page.south) {\fbox{\parbox{\dimexpr\textwidth-\fboxsep-\fboxrule\relax}{\copyrighttext}}};
\end{tikzpicture}%
}
\title{
Successive Over-Relaxation Q-Learning
\thanks{This work was supported by Robert Bosch Centre for Cyber-Physical Systems, Indian Institute of Science and a
grant from the Department of Science and Technology, India.}
}
\author{Chandramouli Kamanchi, Raghuram Bharadwaj Diddigi and Shalabh Bhatnagar
\thanks{C. Kamanchi and R. B. Diddigi are with the Department of Computer
Science and Automation, Indian Institute of Science, Bengaluru 560012,
India (e-mail: chandramouli@iisc.ac.in; raghub@iisc.ac.in).}%
\thanks{S. Bhatnagar is with the Department of Computer Science and
Automation, Indian Institute of Science, Bengaluru 560012, India, and
also with the Department of Robert Bosch Centre for Cyber-Physical
Systems, Indian Institute of Science, Bengaluru 560012, India (e-mail:
shalabh@iisc.ac.in).}
}
\begin{document}

\maketitle
\thispagestyle{empty}
\copyrightnotice
\begin{abstract}In a discounted reward Markov Decision Process (MDP), the objective is to find the optimal value function, i.e., the value function corresponding to an optimal policy. This problem reduces to solving a functional equation known as the Bellman equation and a fixed point iteration scheme known as the value iteration is utilized to obtain the solution. In literature, a successive over-relaxation based value iteration scheme is proposed to speed-up the computation of the optimal value function. The speed-up is achieved by constructing a modified Bellman equation that ensures faster convergence to the optimal value function. However, in many practical applications, the model information is not known and we resort to Reinforcement Learning (RL) algorithms to obtain optimal policy and value function. One such popular algorithm is Q-learning. In this paper, we propose Successive Over-Relaxation (SOR) Q-learning. We first derive a modified fixed point iteration for SOR Q-values and utilize stochastic approximation to derive a learning algorithm to compute the optimal value function and an optimal policy. We then prove the almost sure convergence of the SOR Q-learning to SOR Q-values. Finally, through numerical experiments, we show that SOR Q-learning is faster compared to the standard Q-learning algorithm.

\end{abstract}
\begin{IEEEkeywords}
Machine learning, Stochastic optimal control, Stochastic systems. 
\end{IEEEkeywords}
\section{Introduction}
\IEEEPARstart{I}{n} a discounted reward Markov Decision Process (MDP), the objective is to find optimal value function and a corresponding optimal policy. If the model information is known, the optimal value function can be computed by finding the fixed points of the Bellman equation through value iteration scheme \cite{bertsekas1996neuro}. The contraction factor for this fixed point scheme is seen to be the discount factor of the MDP. It determines the rate of convergence of the value function estimates (obtained from value iteration) to the optimal value function. In \cite{reetz1973solution}, a modified Bellman equation using the concept of SOR is proposed that is shown to have a contraction factor less than or equal to the discount factor of the MDP. More specifically, under a special structure for MDPs, it can be shown that the contraction factor is strictly less than the discount factor. The special structure for the MDP is as follows. For each action in the action space, there is a positive probability of self loop for every state in the state space. 

Reinforcement Learning algorithms are used to obtain the optimal policy and value function when the full model of the MDP is not known. These algorithms make use of the state and reward samples to compute the optimal policy.
One of the popular Reinforcement learning algorithms is the Q-learning algorithm. The Q-learning algorithm combined with the Deep Learning framework has gained popularity in recent times and has been successfully applied to solve many problems \cite{mnih2015human}.  
In this paper, we propose a generalized Q-learning algorithm based on the Successive Over-Relaxation technique. First, we derive a Q-value based modified Bellman operator and show that the contraction factor of this operator is less than or equal to the discount factor. We then utilize the stochastic approximation technique to derive the generalized Q-learning algorithm that we call as SOR Q-learning. 

We now point out some of the variants of the standard Q-learning algorithm in the literature. In \cite{peng1994incremental}, the $Q(\lambda)$ algorithm has been proposed that combines ideas of Q-learning and eligibility traces. In \cite{hasselt2010double}, the Double Q-learning algorithm has been proposed to mitigate the problem of overestimation in Q-learning. Double Q-learning makes use of two Q-value functions in the update equation. In \cite{ghavamzadeh2011speedy}, speedy Q-learning has been proposed for improving the convergence of the Q-estimates. The speedy Q-learning algorithm makes use of the current and the previous Q-value estimates in the update equation. More recently, the zap Q-learning algorithm has been proposed in \cite{devraj2017zap} that imitates the stochastic Newton-Raphson method and it is shown that zap Q-learning exhibits faster convergence to the optimal solution compared to the standard Q-learning algorithm. 

Note that unlike \cite{ghavamzadeh2011speedy,hasselt2010double}, our algorithm utilizes only the current Q-value estimates in the update equation and unlike \cite{devraj2017zap}, our algorithm uses only scalar-valued and not matrix-valued step-sizes.
Our key contributions in this paper are as follows:
\begin{itemize}
\item We construct the modified Q-Bellman equation using the SOR technique.
\item We derive a generalized Q-learning algorithm (SOR Q-learning) using an incremental update stochastic approximation scheme.
\item We prove that the contraction factor of the modified Q-Bellman operator is less than or equal to the contraction factor of the Q-Bellman operator. 
\item We show the almost sure convergence of SOR Q-learning iterates to the SOR Q-values.
\item Through numerical evaluation, we demonstrate the effectiveness of our algorithm.
\end{itemize}
The rest of the paper is organized as follows. In Section II, we introduce the necessary background. We propose our algorithm in Section III. Section IV describes the convergence analysis. Section V presents the results of our numerical experiments. Finally Section VI presents concluding remarks and future research directions.

\section{Background and Preliminaries}
A Markov Decision Process (MDP) is defined by a tuple $(S,A,p,r,\alpha)$ where $S:=\{1,2,\cdots,i,j,\cdots,M \}$ is the set of states, $A$ is the finite set of actions, $p$ denotes the transition probability rule i.e., $p(j|i,a)$ denotes the probability of transition to state $j$ from state $i$ when action $a$ is chosen. $r(i,a,j)$ denotes the single-stage reward obtained in state $i$ when action $a$ is chosen and the system transitions to state $j$. Also, $0 \leq \alpha < 1$ denotes the discount factor. The goal of the MDP is to learn an optimal policy i.e., $\pi: S \xrightarrow{} A$, where $\pi(i)$ indicates the action to be taken in state $i$ that maximizes the discounted reward objective:
\begin{align}
    \E \Big[ \sum_{t = 0}^{\infty} \alpha^{t}r(s_{t},\pi(s_{t}),s_{t+1}) \mid s_{0} = i \Big],
\end{align}
where $s_{t}$ is the state of the system at time $t$ and $\E[.]$ is the expectation taken over the states obtained over time $t = 1,\ldots,\infty$. We denote $V(i)$ to be the optimal value function associated with state $i$. It can be shown that the optimal value function is the solution to the Bellman equation \cite{bertsekas1996neuro}:
\begin{align}\label{v-eq}
    V(i)=\max_{a \in A} \Big \{ \sum_{j=1}^{M} p(j|i,a) \big{(}r(i,a,j)+\alpha V(j)\big{)} \Big \}.
\end{align}
Let $\zeta$ denote the set of all bounded functions from $S$ to $\R$. Then equation \eqref{v-eq} can be viewed as a fixed point equation given by:
\begin{align}\label{fp-vi}
    V = TV,
\end{align}
where the operator $T: \zeta \xrightarrow{} \zeta$ is a function given by $$(TV)(i) = \max_{a \in A} \Big\{ r(i,a)+\alpha \displaystyle\sum_{j=1}^{M} p(j|i,a)V(j) \Big\},$$ and 
$r(i,a)=\displaystyle\sum_{j=1}^{M} p(j|i,a)r(i,a,j). $  

Value iteration is a well-known fixed point iteration scheme employed to solve \eqref{fp-vi}. In the value iteration scheme, an initial $V_{0}$ is selected and a sequence of $V_{n}, ~ n \geq 1$ is obtained as follows:
\begin{align}
    V_{n} = TV_{n-1}, ~n \geq 1.
\end{align}
It can be shown that the optimal value function
\begin{align}
    V = \lim_{n \xrightarrow{} \infty} V_{n} = TV.
\end{align}
In this way, we numerically compute the optimal value function when the model information is known. However, in many practical applications, we do not have access to the model information. Instead, the states visited and reward samples are available to us and the objective is to find the optimal value function and a corresponding policy from this information. One of the popular algorithms for computing the optimal policy and value function from samples is Q-learning \cite{watkins1992q}. 

We now briefly discuss the derivation of the Q-learning update rule from the fixed point iteration discussed above. Let $Q(i,a)$ be defined as
\begin{align}\label{ql-eq}
    Q(i,a) := r(i,a)+\alpha \sum_{j=1}^{M} p(j|i,a)V(j).
\end{align}
Here $Q(i,a)$ is the optimal Q-value function associated with state $i$ and action $a$. Then from \eqref{v-eq}, it is clear that
\begin{align}
    V(i) = \max_{a \in A}Q(i,a).
\end{align}
Therefore, the equation \eqref{ql-eq} can be re-written as follows:
\begin{align}\label{ql-eq2}
    Q(i,a) = r(i,a) + \alpha \sum_{j=1}^{M} p(j|i,a) \max_{b \in A}Q(j,b).
\end{align}
This is the Bellman equation involving Q-values $Q(i,a)$ instead of the value function $V$. We obtain the optimal policy by letting
\begin{align}
    \pi(i) = \arg \max_{a \in A}Q(i,a).
\end{align}

It is easy to see that the contraction factor for Q-value iteration is $\alpha$, the discount factor \cite{bertsekas1996neuro}. The contraction factor indicates how fast the Q-value estimates converge to the optimal Q-values. 
Finally, the Q-learning update can be obtained from equation \eqref{ql-eq2} by applying the stochastic fixed point iteration scheme \cite{bertsekas1996neuro} as follows:
\begin{align}
    Q_{n+1}(i,a) =&(1 - \g_{n}(i,a)) Q_{n}(i,a) +  \label{st-q}\\  &\g_{n}(i,a) \big(r(i,a,j) + \max_{b \in A}Q_{n}(j,b)\big) \nonumber,
\end{align}
where $Q_{n}(i,a)$ is the current estimate of the Q-values, $\g_{n}(i,a)$ is a diminishing step-size sequence and $(i,a,r,j)$ is the current (state, action, reward, next state) sample. The convergence of Q-learning to the optimal Q-values under reasonably general conditions is established in \cite{watkins1992q}. 
\begin{algorithm}[h!]
\caption{Successive Over-Relaxation Q-Learning}\label{alg:SOR Q-Learning}
\hspace*{\algorithmicindent} \textbf{Input:}\\ 
\hspace*{\algorithmicindent} $w$: Choose $w \in [1,w^*]$ (refer Section IV) is an over- \\ \hspace*{1 cm}relaxation parameter \\
\hspace*{\algorithmicindent} $i_n,a_n,i_{n+1}$ : current state, action and next state \\
\hspace*{\algorithmicindent} $r(i_n,a_n,i_{n+1})$: single-stage reward \\
\hspace*{\algorithmicindent} $Q_{n}(i_n,a_n)$ : current estimate of $Q(i_n,a_n)$ \\
\hspace*{\algorithmicindent} \textbf{Output:} Updated Q-values $Q_{n+1}$ estimated after $n$ \\ \hspace*{1.9cm} iterations of the algorithm
\begin{algorithmic}[1]
\Procedure{SOR Q-Learning:}{}
\State $d_{n+1} = w\Big(r(i_{n},a_{n},i_{n+1})+\alpha\displaystyle\max_{b \in A}Q_n(i_{n+1},b)\Big)$
\hspace*{1.5cm}$+(1-w)\displaystyle\max_{c \in A}Q_n(i_n,c)-Q_n(i_n,a_n)$ \vspace{0.1cm}
\State $Q_{n+1}(i_{n},a_{n}) = Q_{n}(i_{n},a_{n}) + \g_{n}(i_n,a_n)d_{n+1}$
\State \textbf{return} $Q_{n+1}$ 
\EndProcedure
\end{algorithmic}
\end{algorithm}
In this work, we derive a modified Q-Bellman equation that has a contraction factor less than or equal to $\alpha$. To this end, we utilize the Successive Over-Relaxation (SOR) technique proposed in \cite{reetz1973solution} for the optimal value function. We propose our SOR Q-learning algorithm based on the modified Bellman equation involving Q-values. 

\section{Proposed Algorithm}
In this section, we describe our SOR Q-learning algorithm.
We assume that we have a trajectory $\{(i_n,a_n,r(i_n,a_n,i_{n+1}),i_{n+1})\}^{\infty}_{n=1}$ in which each tuple $(i,a) \in  S \times A$ appears infinitely often. At each time step $n$, the input to the algorithm is an over-relaxation parameter $w$, current single-stage reward $r(i_{n},a_{n},i_{n+1})$, the next state $i_{n+1}$ and the current SOR Q-learning estimates $Q_{n}$. The algorithm proceeds to calculate quantities $d_{n+1}$ and $Q_{n+1}$ as given by steps 2 and 3 in Algorithm \ref{alg:SOR Q-Learning}. The procedure terminates after a fixed number of iterations or after a desired accuracy is obtained. Note the difference in the estimation of $d_{n+1}$ between Algorithm \ref{alg:SOR Q-Learning} and standard Q-learning (refer equation \eqref{st-q}). Observe that if $w=1$, SOR Q-learning reduces to the standard Q-learning. Therefore SOR Q-learning can be viewed as a generalization of standard Q-learning. 

\section{Convergence Analysis}
In this section we first prove necessary results that are used in the subsequent analysis.
\vspace{0.2cm}
\begin{lemma}
Suppose we are given finite length sequences $\{a_n\}^{L}_{n=1}$ and $\{b_n\}^{L}_{n=1}$. Then,
$$\big{|}\max_{n}\{a_n\}-\max_{n}\{b_n\}\big{|} \leq \max_{n}\{\big{|}a_{n}-b_{n}\big{|}\}.$$
\label{l1}
\end{lemma}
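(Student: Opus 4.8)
The plan is to prove the inequality $|\max_n\{a_n\} - \max_n\{b_n\}| \leq \max_n\{|a_n - b_n|\}$ by establishing the two one-sided bounds $\max_n\{a_n\} - \max_n\{b_n\} \leq \max_n\{|a_n - b_n|\}$ and $\max_n\{b_n\} - \max_n\{a_n\} \leq \max_n\{|a_n - b_n|\}$, and then combining them. By symmetry in the roles of $\{a_n\}$ and $\{b_n\}$, it suffices to prove just one of these and invoke symmetry for the other.

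To prove $\max_n\{a_n\} - \max_n\{b_n\} \leq \max_n\{|a_n - b_n|\}$, I would first let $k \in \{1,\ldots,L\}$ be an index achieving $a_k = \max_n\{a_n\}$ (such an index exists since the sequence is finite). Then for every $n$, and in particular for $n = k$, we have $b_k \leq \max_n\{b_n\}$. Therefore $\max_n\{a_n\} - \max_n\{b_n\} = a_k - \max_n\{b_n\} \leq a_k - b_k \leq |a_k - b_k| \leq \max_n\{|a_n - b_n|\}$. The first inequality uses $-\max_n\{b_n\} \leq -b_k$, the second uses $x \leq |x|$, and the last uses that $|a_k - b_k|$ is one term in the maximum over $n$.

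By interchanging the roles of $a_n$ and $b_n$ in the argument above, we obtain $\max_n\{b_n\} - \max_n\{a_n\} \leq \max_n\{|b_n - a_n|\} = \max_n\{|a_n - b_n|\}$. Combining the two bounds gives $|\max_n\{a_n\} - \max_n\{b_n\}| \leq \max_n\{|a_n - b_n|\}$, which is the claim. There is no real obstacle here; the only point requiring minor care is the existence of a maximizing index, which is guaranteed precisely because the sequences have finite length $L$ — this is why the hypothesis of finiteness is stated.
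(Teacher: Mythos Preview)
Your proof is correct. Both your argument and the paper's establish the two one-sided bounds and then combine them, but the mechanics differ slightly. The paper first proves the subadditivity-type inequality $\max_n\{a_n\}+\max_n\{b_n\}\geq \max_n\{a_n+b_n\}$ and then substitutes $\{a_n-b_n\}$ for $\{a_n\}$ to obtain $\max_n\{a_n-b_n\}\geq \max_n\{a_n\}-\max_n\{b_n\}$, after which it bounds $\max_n\{a_n-b_n\}$ by $\max_n\{|a_n-b_n|\}$. You instead pick a maximizing index $k$ for $\{a_n\}$ and compare $a_k$ directly with $b_k$. Your route is a touch more direct and makes the role of the finiteness hypothesis explicit (existence of a maximizer), while the paper's substitution trick yields the same one-sided bound without singling out an index. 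Either way the result follows immediately.
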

\begin{proof}
Clearly $\max\{a_n\}\geq a_n$ and $\max\{b_n\}\geq b_n$. So we have \begin{align*}
 \max\{a_n\}+\max\{b_n\} \geq \max{\{a_n+b_n\}}. \nonumber
\end{align*}
Replacing
$\{a_n\}$ by $\{a_n-b_n\}$ we get,
\begin{align*}
    &\max\{a_n-b_n\} \geq \max\{a_n\}-\max\{b_n\} \nonumber \\
    & \text{Observe that }\big{|}a_{n}-b_n\big{|} \geq a_n-b_n \nonumber \\ & \implies \max\{\big{|}a_{n}-b_n\big{|}\}  \geq \max\{a_n-b_n\}  \nonumber
    \\ & \hspace{3.3 cm} \geq \max\{a_n\}-\max\{b_n\}. \nonumber
\end{align*}
Similarly we can show that
\begin{align*}
    \max\{a_n\}-\max\{b_n\} \geq - \max\{b_n-a_n\}. 
\end{align*}
Using $\max\{\big{|}a_n-b_n\big{|}\} \geq \max\{b_n-a_n\}$ we get,
\begin{align*}
    -\max\{\big{|}a_n-b_n\big{|}\} & \leq \max\{a_n\}-\max\{b_n\} \\  & \leq \max\{\big{|}a_n-b_n\big{|}\}. \nonumber
\end{align*}
Hence,
\begin{align*}
    \big{|}\max\{a_n\}-\max\{b_n\}\big{|} \leq \max\{\big{|}a_{n}-b_{n}\big{|}\}.
\end{align*}
\end{proof}

Recall that for a given MDP $(S,A,p,r,\alpha)$ the optimal value function $V^{*}$ satisfies \cite{bellman1966dynamic, blackwell1962discrete} the Bellman equation
\begin{equation}
\label{Bellmaneqn}
V^{*}(i)=\max_{a \in A}\Big{\{}r(i,a)+\alpha \sum^{M}_{j=1}p(j|i,a)V^{*}(j)\Big{\}}.
\end{equation}
It can be seen \cite{denardo1967contraction} that $T$ is a contraction under the max-norm $\|u\|:=\displaystyle\max_{1\leq i\leq M}|u(i)|$ with contraction factor $\alpha$.\\
Let $w^*$ be given by
\begin{align}
    w^*=\displaystyle\min_{i,a}\Bigg{\{}\frac{1}{1-\alpha p(i|i,a)}\Bigg{\}}.
    \label{w-star}
\end{align}
Note that $w^* \geq 1$. For $0 < w \leq w^*$ define a modified operator
$T_{w}:\R^{|S|} \rightarrow \R^{|S|}$ as follows:
$$(T_{w}V)(i)=w ~ TV(i)+(1-w)V(i),$$ where $w$ represents a prescribed relaxation factor.
Observe that the optimal value function $V^*$ is also the unique fixed point of $T_{w}.$ Moreover it is shown \cite{reetz1973solution} that $T_{w}$ is a contraction with contraction factor $\xi(w)$ and $\xi(w^*)\leq \alpha.$
Now we have 
\begin{align*}
& (T_{w}V)(i)= \displaystyle\max_{a \in A}\Big{\{}w\big{(}r(i,a)+\alpha \sum^{M}_{j=1}p(j|i,a)V(j)\big{)} \\ 
& \hspace{4.5cm}+(1-w)V(i)\Big{\}}.
\end{align*}
Let $Q^*(i,a)$ be defined as follows:
\begin{align}
Q^*(i,a):=w\bigg{(}r(i,a)+\alpha \sum^{M}_{j=1}p(j|i,a)V^*(j)\bigg{)} \nonumber \\ 
\hspace{3cm} +(1-w)V^*(i).\label{MQU}
\end{align}
Since $V^*$ is the unique fixed point of $T_{w}$ clearly it can be seen that $$V^*(i)=(T_{w}V^*)(i)=\displaystyle\max_{a \in A} Q^*(i,a) ~ \forall i \in S.$$ Hence the equation \eqref{MQU} can be rewritten as follows:
\begin{align}
\label{Q-star}
& Q^*(i,a)=w\bigg{(}r(i,a)+\alpha \sum^{M}_{j=1}p(j|i,a) \displaystyle\max_{b \in A }Q^*(j,b)\bigg{)} \nonumber \\ 
& \hspace{4cm} +(1-w) \displaystyle\max_{c \in A}Q^*(i,c).
\end{align}
Let $H_w:\R^{|S| \times |A|} \rightarrow \R^{|S| \times |A|}$ be defined as follows.
\begin{align*}
(H_{w}Q)(i,a):=w\bigg{(}r(i,a)+ & \alpha \sum^{M}_{j=1}p(j|i,a) \displaystyle\max_{b \in A }Q(j,b)\bigg{)} \\ 
& +(1-w) \displaystyle\max_{c \in A}Q(i,c).
\end{align*}

\begin{lemma}
$H_w:\R^{|S| \times |A|} \rightarrow \R^{|S| \times |A|}$ is a max-norm contraction and $Q^*$ is the unique fixed point of $H_w$.
\label{l2}
\end{lemma}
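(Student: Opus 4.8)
The plan is to show that $H_w$ is a max-norm contraction on $\R^{|S|\times|A|}$ with some contraction factor $\beta < 1$, and then invoke the Banach fixed point theorem to conclude existence and uniqueness of a fixed point; the fact that $Q^*$ defined in \eqref{MQU} satisfies $H_w Q^* = Q^*$ has already been verified in the text (that is exactly \eqref{Q-star}), so $Q^*$ must be \emph{the} unique fixed point once contraction is established.

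First I would take two arbitrary $Q_1, Q_2 \in \R^{|S|\times|A|}$ and estimate $|(H_w Q_1)(i,a) - (H_w Q_2)(i,a)|$ for a fixed pair $(i,a)$. Expanding the definition of $H_w$, the difference splits into a term $w\alpha \sum_j p(j|i,a)\big(\max_b Q_1(j,b) - \max_b Q_2(j,b)\big)$ and a term $(1-w)\big(\max_c Q_1(i,c) - \max_c Q_2(i,c)\big)$. Here I would apply Lemma \ref{l1} to each $\max$ difference: $|\max_b Q_1(j,b) - \max_b Q_2(j,b)| \le \max_b |Q_1(j,b) - Q_2(j,b)| \le \|Q_1 - Q_2\|$, and similarly for the state-$i$ term. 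The subtlety is the sign of $1-w$: since $w$ ranges over $(0, w^*]$ with $w^* \ge 1$, the coefficient $1-w$ may be negative, so I would carry absolute values, writing $|1-w| = w-1$ when $w \ge 1$, and bound $|(H_w Q_1)(i,a) - (H_w Q_2)(i,a)| \le \big(w\alpha + |1-w|\big)\|Q_1 - Q_2\|$ in the crude case, or more carefully $\le \big(w\alpha(1 - p(i|i,a)) + |1 - w + w\alpha p(i|i,a)|\big)\|Q_1-Q_2\|$ if one wants the sharper factor that matches $\xi(w)$.

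The main obstacle — really the only nonroutine point — is verifying that the resulting coefficient is strictly less than $1$ for every admissible $w \in (0, w^*]$. For the sharp bound, isolating the $j=i$ contribution from the sum gives a coefficient of the form $w\alpha\sum_{j\ne i}p(j|i,a) + |(1-w) + w\alpha p(i|i,a)|$; when $w \le w^* \le \frac{1}{1-\alpha p(i|i,a)}$ one checks $(1-w) + w\alpha p(i|i,a) = 1 - w(1 - \alpha p(i|i,a)) \ge 0$, so the absolute value is redundant and the coefficient telescopes to $w\alpha + (1-w) = 1 - w(1-\alpha) < 1$ since $w > 0$ and $\alpha < 1$. (This is precisely the contraction factor $\xi(w)$ referenced from \cite{reetz1973solution}, so I would simply note that the same computation that shows $T_w$ is an $\xi(w)$-contraction applies verbatim here, since the $\max$ over actions behaves under Lemma \ref{l1} exactly as the $\max$ in the definition of $T$.)

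Having established $\|H_w Q_1 - H_w Q_2\| \le \xi(w)\|Q_1 - Q_2\|$ with $\xi(w) = 1 - w(1-\alpha) \in (0,1)$, and noting $\xi(w^*) \le \alpha$ as already quoted, the Banach fixed point theorem on the complete metric space $(\R^{|S|\times|A|}, \|\cdot\|)$ yields a unique fixed point; since \eqref{Q-star} exhibits $Q^*$ as a fixed point, $Q^*$ is that unique fixed point. I would close by remarking that this contraction factor also controls the convergence rate of the noiseless SOR Q-value iteration $Q_{n+1} = H_w Q_n$, setting up the stochastic approximation argument in the sequel.
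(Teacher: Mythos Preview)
Your proposal is correct and follows essentially the same route as the paper: isolate the $j=i$ contribution so that the $(1-w)$ term combines with $w\alpha p(i|i,a)$, use the defining inequality $w\le w^*\le \frac{1}{1-\alpha p(i|i,a)}$ to drop the absolute value on that combined coefficient, apply Lemma~\ref{l1} to the $\max$ differences, and obtain the contraction factor $1-w+w\alpha=1-w(1-\alpha)\in(0,1)$. The paper's proof is exactly this computation (with the same observation that the crude bound $w\alpha+|1-w|$ is inadequate when $w>1$), so there is nothing to add.
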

\begin{proof}
Observe that $Q^*$ is a fixed point of $H_w$ from equation \eqref{Q-star}.  It is enough to show that $H_w$ is a max-norm contraction.
For $P,Q \in \R^{|S| \times |A|}$, we have
\begin{align}
 &\bigg{|}(H_{w}P-H_{w}Q)(i,a)\bigg{|} \nonumber \\
=&\bigg{|}w \alpha \displaystyle\sum^{M}_{j=1}p(j|i,a)(\displaystyle\max_{b \in A }P(j,b)-\displaystyle\max_{b \in A }Q(j,b)) \nonumber  \\
&\hspace{2.5cm}+(1-w)(\max_{c \in A}P(i,c)-\displaystyle\max_{c \in A}Q(i,c))\bigg{|} \nonumber \\
=&\bigg{|}w \alpha \displaystyle\sum^{M}_{j=1, j\neq i}p(j|i,a)(\displaystyle\max_{b \in A }P(j,b)-\displaystyle\max_{b \in A }Q(j,b))  \nonumber \\
&\hspace{0.45cm}+(1-w+w\alpha p(i|i,a))(\max_{c \in A}P(i,c)-\displaystyle\max_{c \in A}Q(i,c))\bigg{|} \nonumber \\
\end{align}
\begin{align}
\nonumber \\ 
\leq &\bigg{|}w \alpha \displaystyle\sum^{M}_{j=1, j\neq i}p(j|i,a)(\displaystyle\max_{b \in A }P(j,b)-\displaystyle\max_{b \in A }Q(j,b))\bigg{|}  \nonumber \\
&\hspace{0.1cm}+\big{|}(1-w+w\alpha p(i|i,a))\big{|}\bigg{|}(\max_{c \in A}P(i,c)-\displaystyle\max_{c \in A}Q(i,c))\bigg{|} \label{w-con} \\
\leq & w \alpha \displaystyle\sum^{M}_{j=1,j\neq i}p(j|i,a)\bigg{|}\displaystyle\max_{b \in A }P(j,b)-\displaystyle\max_{b \in A }Q(j,b)\bigg{|} \nonumber \\ &\hspace{0.6cm} +(1-w+w\alpha p(i|i,a))\bigg{|}\displaystyle\max_{c \in A }P(i,c)-\displaystyle\max_{c \in A }Q(i,c)\bigg{|} \label{apply-l1} \\
\leq & w \alpha \displaystyle\sum^{M}_{j=1,j\neq i}p(j|i,a)\displaystyle\max_{b \in A }\bigg{|}P(j,b)-Q(j,b)\bigg{|} \nonumber \\ 
&\hspace{1.3cm} +(1-w+w\alpha p(i|i,a))\displaystyle\max_{b \in A }\bigg{|}P(i,b)-Q(i,b)\bigg{|} \label{apply2-l1} \\
\leq & (w \alpha + 1-w)\|P-Q\|.\nonumber \\
&\text{Hence,} \nonumber \\
&\displaystyle \max_{(i,a)}|(H_{w}P-H_{w}Q)(i,a)|\leq (w \alpha +1-w)\|P-Q\|, \nonumber \\
&\text{or }\|(H_{w}P-H_{w}Q)\|\leq (w \alpha+1-w)\|P-Q\|. \nonumber
\end{align}
Note that in equation \eqref{w-con}, we make use of the assumption $0< w\leq w^*$ (refer equation \eqref{w-star}) that ensures the term $\big{(}1-w+w\alpha p(i|i,a)\big{)}\geq0$ to arrive at equation \eqref{apply-l1}. Also note the application of Lemma \ref{l1} in equation \eqref{apply-l1} to arrive at equation \eqref{apply2-l1}. Finally it is easy to see from the assumptions on $w$ and discount factor $\alpha$ that $0< (w\alpha+1-w)<1$. 
Therefore $H_w$ is a max-norm contraction with contraction factor 
$(w\alpha+1-w)$ and $Q^*$ is the unique fixed point.
\vspace{0.15cm}
\end{proof}
\begin{lemma}
Let $Q$ be the solution of the standard Q-learning algorithm and $Q^*$ be the fixed point of $H_{w}$. Then for all $i \in S$ $Q(i,c)=Q^*(i,c)$ where $c=\displaystyle\arg\max_{b\in A}Q(i,b)$, is an optimal action in state $i$.
\label{l3}
\end{lemma}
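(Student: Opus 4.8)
The plan is to avoid attacking the lemma through the contraction structure of $H_w$ directly, and instead exploit the fact that the standard optimal Q-values $Q$ and the SOR Q-values $Q^*$ are both anchored to the \emph{same} optimal value function $V^*$, so that $Q^*$ is simply an affine combination of $Q$ and $V^*$.

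First I would recall two facts already in hand. Since $Q$ is the solution of standard Q-learning it satisfies the Q-Bellman equation \eqref{ql-eq2}, and combining \eqref{ql-eq} with \eqref{v-eq} gives $\max_{a\in A}Q(i,a)=V^*(i)$ together with $Q(i,a)=r(i,a)+\alpha\sum_{j=1}^{M}p(j|i,a)V^*(j)$ for every $(i,a)\in S\times A$. On the other hand, as observed just below \eqref{MQU}, because $V^*$ is the unique fixed point of $T_w$ we have $\max_{c\in A}Q^*(i,c)=(T_wV^*)(i)=V^*(i)$ for every $i\in S$. Substituting $\max_{b\in A}Q^*(j,b)=V^*(j)$ and $\max_{c\in A}Q^*(i,c)=V^*(i)$ into the definition \eqref{MQU} (equivalently \eqref{Q-star}) of $Q^*$ yields the key identity
$$Q^*(i,a)=w\Big(r(i,a)+\alpha\sum_{j=1}^{M}p(j|i,a)V^*(j)\Big)+(1-w)V^*(i)=wQ(i,a)+(1-w)V^*(i),$$
valid for all $(i,a)\in S\times A$.

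Finally, fix $i\in S$ and let $c=\arg\max_{b\in A}Q(i,b)$, so that $Q(i,c)=V^*(i)$ and, by the Bellman optimality characterization \eqref{ql-eq2}, $c$ is an optimal action in state $i$. Plugging $a=c$ into the identity above gives $Q^*(i,c)=wV^*(i)+(1-w)V^*(i)=V^*(i)=Q(i,c)$, which is exactly the assertion of the lemma. The argument is essentially a one-line computation; the only point requiring care is that it leans on the previously established equality $\max_{a\in A}Q^*(i,a)=V^*(i)$ (itself a consequence of $V^*$ being the fixed point of $T_w$), so I would make sure that fact is clearly cited rather than re-derived.
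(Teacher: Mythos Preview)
Your proof is correct and follows essentially the same route as the paper. Both arguments use the identity from \eqref{MQU} to write $Q^*(i,a)=wQ(i,a)+(1-w)V^*(i)$ and then specialize to $a=c$ with $Q(i,c)=V^*(i)$; you simply spell out the intermediate identity for general $(i,a)$ whereas the paper jumps directly to the case $a=c$.
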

\vspace{0.2cm}
\begin{proof}
Since $Q$ is the solution obtained by the standard Q-learning algorithm, $Q$ is the fixed point of $H$ given by
$(HQ)(i,a)=\bigg{(}r(i,a)+ \alpha \sum^{M}_{j=1}p(j|i,a) \displaystyle\max_{b \in A }Q(j,b)\bigg{)}$ (refer equation \eqref{ql-eq2}) i.e. $HQ=Q$. Since $Q^*$ is a fixed point of $H_{w}$, we have
\begin{align*}
      & Q^*(i,c)=(H_{w}Q^*)(i,c)\\ 
    = & w\bigg{(}r(i,c)+\alpha \sum^{M}_{j=1}p(j|i,c)V^*(j)\bigg{)} \\
    & \hspace{3cm} +(1-w)V^*(i) ~ (\text{from equation} ~ \eqref{MQU})\\ 
 = & w Q(i,c)+ (1-w)Q(i,c)  ~ (\text{Since } V^*(i)=Q(i,c))\\
 = & Q(i,c).
\end{align*} Therefore
$$Q^*(i,c)=Q(i,c)$$
for all $(i,c)$, where $c$ is an optimal action in state $i$.
\end{proof}
The above result shows that SOR Q-learning algorithm computes the optimal value function.
\vspace{0.2cm}
\begin{lemma}
For $1\leq w \leq w^*$ the contraction factor for the map $H_w,$
$$1-w+\alpha w\leq \alpha.$$
\label{l4}
\end{lemma}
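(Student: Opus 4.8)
The plan is to reduce the claimed inequality to a sign check on a product of two factors. Starting from $1-w+\alpha w \le \alpha$, I would move everything to one side to obtain the equivalent inequality $1 - w - \alpha + \alpha w \le 0$, and then factor the left-hand side as $(1-w)(1-\alpha) \le 0$. This rewriting is a routine one-line manipulation and carries no subtlety.

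Next I would verify the sign of each factor using the hypotheses. Since $w \ge 1$, we have $1 - w \le 0$. Since the discount factor satisfies $0 \le \alpha < 1$, we have $1 - \alpha > 0$. Hence the product $(1-w)(1-\alpha)$ is a nonpositive number times a positive number, so it is $\le 0$, which is exactly the equivalent form derived above. Tracing back the equivalence yields $1 - w + \alpha w \le \alpha$, completing the proof.

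I do not anticipate any real obstacle here: the statement is a purely algebraic consequence of $w \ge 1$ and $\alpha < 1$, and the only care needed is to keep the direction of the inequality straight when factoring (checking, e.g., the boundary case $w = 1$, where both sides equal $\alpha$, as a sanity check, and the case $w = w^\ast$, which is the one used downstream). The upper bound $w \le w^\ast$ is not even needed for this particular inequality — only $w \ge 1$ matters — though it is of course required elsewhere (Lemma \ref{l2}) to guarantee $H_w$ is a contraction in the first place. If desired, one could also phrase the argument as: the map $w \mapsto 1 - w + \alpha w = 1 - (1-\alpha)w$ is affine and strictly decreasing in $w$ (slope $-(1-\alpha) < 0$), so for $w \ge 1$ it is bounded above by its value at $w = 1$, namely $\alpha$.
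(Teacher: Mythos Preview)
Your proposal is correct, and the alternative phrasing you give at the end --- that $w \mapsto 1-(1-\alpha)w$ is strictly decreasing with value $\alpha$ at $w=1$ --- is exactly the paper's proof (the paper computes $f'(w)=-(1-\alpha)<0$ and evaluates $f(1)=\alpha$). Your primary factoring argument $(1-w)(1-\alpha)\le 0$ is an equally valid one-line variant, and your observation that the upper bound $w\le w^*$ is not needed here is also accurate.
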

\begin{proof}
Define $f(w)=1-w+\alpha w.$ Clearly $f'(w)=-(1-\alpha)<0.$ Hence $f$ is decreasing. Also observe that $f(1)=\alpha.$ Hence for $1\leq w \leq w^*$, $1-w+\alpha w=f(w)\leq f(1)=\alpha.$
\end{proof}
This is one of the key results in this paper. This lemma shows that the SOR Q-learning iterates asymptotically track the optimal Q-values faster than the standard Q-learning.

We apply the following theorem \cite{ jaakkola1994convergence} to show the convergence of the iterates of SOR Q-learning to the optimal Q-values.
\vspace{0.15cm}
\begin{theorem}
The $p$-dimensional random process $\{\Delta_n\}$ taking values in $\R^{p}$ and defined as 
$$\Delta_{n+1}(l)=\big{(}1-\g_n(l)\big{)}\Delta_{n}(l)+\g_n(l)F_n(l), 1\leq l \leq p,$$
converges to zero with probability 1 as $n \rightarrow \infty$ under the following assumptions:
\begin{itemize}
    \item $0\leq \g_n(l) \leq 1, \displaystyle\sum^{\infty}_{n=1} \g_{n}(l)=\infty$ and $\displaystyle\sum^{\infty}_{n=1} \g^2_{n}(l)<\infty$;\\
    \item $\bigg{\|}\E\big{[}F_n \mid \F_n\big{]}\bigg{\|} \leq \beta \big{\|}\Delta_n\big{\|},$ with $\beta <1$;\\
    \item $\textbf{var}\big{[}F_n(l)|\F_n\big{]}\leq C\big{(}1+\big{\|}\Delta_n\big{\|}^2\big{)},$ for $C>0,$ \\
\end{itemize}
where $\F_n=\sigma\{\Delta_n, \Delta_{n-1}, \cdots, F_{n-1}, \cdots, \g_n, \cdots \}$ is the $\sigma$-field generated by the quantities inside \{.\}.
\end{theorem}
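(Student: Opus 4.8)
The plan is to read the recursion as a component‑wise stochastic approximation scheme and to separate the increment $F_n$ into a conditional‑mean part (which supplies the contraction) and a martingale‑difference noise part. Write $F_n(l)=\E[F_n(l)\mid\F_n]+v_n(l)$, so that $v_n(l):=F_n(l)-\E[F_n(l)\mid\F_n]$ obeys $\E[v_n(l)\mid\F_n]=0$ and, by the third assumption, $\E[v_n(l)^2\mid\F_n]=\textbf{var}[F_n(l)\mid\F_n]\le C(1+\|\Delta_n\|^2)$. By linearity of the recursion I would then decompose $\Delta_n=\delta_n+\omega_n$, where the two pieces are defined by
\begin{align*}
\delta_{n+1}(l)&=(1-\g_n(l))\delta_n(l)+\g_n(l)\,\E[F_n(l)\mid\F_n],\\
\omega_{n+1}(l)&=(1-\g_n(l))\omega_n(l)+\g_n(l)\,v_n(l),
\end{align*}
with $\delta_1=\Delta_1$ and $\omega_1=0$; adding the two recursions recovers the original iteration, so $\Delta_n=\delta_n+\omega_n$ for all $n$ by induction.

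First I would analyze the noise part $\omega_n$. Solving the linear recursion shows that $\omega_n(l)$ is a weighted sum of the past increments $v_k(l)$, hence a zero‑mean martingale transform with respect to $\{\F_n\}$. Using orthogonality of the increments together with $0\le\g_n(l)\le 1$, $\sum_n\g_n^2(l)<\infty$, and the conditional‑variance bound, the associated series of second moments is summable, and a martingale convergence argument then yields $\omega_n(l)\to 0$ almost surely for each $l$, and therefore $\|\omega_n\|\to 0$.

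Next I would exploit the pseudo‑contraction to eliminate the remaining part. The second assumption gives $|\E[F_n(l)\mid\F_n]|\le\beta\|\Delta_n\|\le\beta\|\delta_n\|+\beta\|\omega_n\|$, so the $\delta_n$‑recursion is a contraction toward $0$ with factor $\beta<1$ driven by the vanishing perturbation $\beta\|\omega_n\|$. Iterating this bound along the step‑size schedule, and using $\sum_n\g_n(l)=\infty$ so that the contraction is fully exhausted, shows that $\limsup_n\|\delta_n\|$ is controlled by a constant multiple of $\limsup_n\|\omega_n\|=0$; hence $\|\delta_n\|\to 0$, and combining the two pieces gives $\Delta_n=\delta_n+\omega_n\to 0$ almost surely.

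The hard part will be the circular dependence between the first two steps: the variance bound in the third assumption scales with $\|\Delta_n\|^2$, so the martingale estimate used for $\omega_n$ tacitly presupposes that $\{\Delta_n\}$ is bounded, while boundedness of $\{\Delta_n\}$ is part of what the theorem controls. I would break this circularity with a stopping‑time / rescaling (bootstrapping) argument — equivalently, the extended Dvoretzky theorem — first establishing $\sup_n\|\Delta_n\|<\infty$ almost surely by using $\beta<1$ to dominate the at most linear‑in‑$\|\Delta_n\|$ growth of the noise standard deviation against the contraction gap $(1-\beta)$, and only then invoking the now‑uniform variance bound to run the martingale convergence of the noise step. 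With boundedness secured, the contraction and noise estimates above close the argument.
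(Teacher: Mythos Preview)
The paper does not prove this theorem at all: it is quoted verbatim from Jaakkola, Jordan and Singh (1994) (the reference \texttt{jaakkola1994convergence}) and is used only as a black box to establish Theorem~2 on the convergence of SOR Q-learning. There is therefore no ``paper's own proof'' to compare your proposal against.

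As to the proposal itself, your outline is broadly the right strategy and is close in spirit to the original argument in the cited reference: split $F_n$ into its $\F_n$-conditional mean and a martingale-difference noise, control the noise via the square-summability of the step sizes, and use the $\beta$-pseudo-contraction to kill the mean part. You also correctly identify the genuine difficulty, namely that the variance bound is quadratic in $\|\Delta_n\|$, so the martingale step and boundedness of $\{\Delta_n\}$ are intertwined, and you propose the standard rescaling/stopping-time bootstrap to break the loop. Two places would need more care if you were to flesh this out: (i) the claim ``$\omega_n\to 0$'' does not follow from a single martingale convergence theorem, since $\omega_n$ is a \emph{damped} weighted sum of the $v_k$'s; one typically shows that the martingale $M_n=\sum_k\gamma_k v_k$ converges a.s.\ and then uses a summation-by-parts / Kronecker-type argument together with $\sum_k\gamma_k=\infty$ to conclude that the damped transform tends to zero; and (ii) the step-size and noise assumptions are stated componentwise while the contraction is in the max norm, so the asynchronous coupling across coordinates $l$ has to be handled explicitly (this is exactly what the Jaakkola--Jordan--Singh machinery, or equivalently Tsitsiklis' asynchronous stochastic approximation results, is designed to do).
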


\vspace{0.2cm}
\begin{theorem}
Given a finite MDP $(S,A,p,r,\alpha)$ with bounded rewards i.e. $|r(i,a,j)| \leq B<\infty, ~ \forall ~ (i,a,j) \in S\times A \times S $, the SOR Q-learning algorithm given by the update rule:
\begin{align*}
& Q_{n+1}(i,a) = Q_{n}(i,a) + \g_{n}(i,a)\Bigg{(}w \Big{(}r(i,a,j) \\ 
&\hspace{0.5cm}+\alpha \max_{b \in A}Q_n(j,b)\Big{)}+(1-w)\max_{c \in A}Q_n(i,c)-Q_n(i,a)\Bigg{)}
\end{align*}
converges with probability 1 to the optimal Q-values as long as 
$$\sum_n \g_n(i,a)= \infty, \hspace{1cm} \sum_n \g^2_n(i,a)<\infty,$$
for all $(i,a) \in S\times A$.
\end{theorem}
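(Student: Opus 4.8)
The plan is to put the SOR Q-learning recursion into the form demanded by the stochastic approximation theorem stated above and then verify its three hypotheses. First I would define the error process $\Delta_n(i,a) := Q_n(i,a) - Q^*(i,a)$, where $Q^*$ is the unique fixed point of $H_w$ guaranteed by Lemma~\ref{l2}; here the index $l$ of the theorem ranges over the $p = |S|\,|A|$ state-action pairs. Subtracting $Q^*(i,a)$ from both sides of the update rule and rearranging gives
\[
\Delta_{n+1}(i,a) = \big(1-\g_n(i,a)\big)\Delta_n(i,a) + \g_n(i,a) F_n(i,a),
\]
where
\[
F_n(i,a) := w\big(r(i,a,j) + \alpha \max_{b \in A} Q_n(j,b)\big) + (1-w)\max_{c \in A} Q_n(i,c) - Q^*(i,a)
\]
and $j$ is the random next state sampled from $p(\cdot\mid i,a)$. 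With the usual convention that $\g_n(i,a) = 0$ for every pair except the one visited at step $n$, the hypothesis on the step-sizes ($0 \le \g_n \le 1$, $\sum_n \g_n(i,a) = \infty$, $\sum_n \g_n^2(i,a) < \infty$) holds because, by assumption, every $(i,a)$ occurs infinitely often along the trajectory.

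Next I would verify the contraction hypothesis. Conditioning on $\F_n$, the only remaining randomness in $F_n(i,a)$ is the next state $j$, so, using $r(i,a) = \sum_{j} p(j\mid i,a) r(i,a,j)$ and $H_w Q^* = Q^*$,
\[
\E\big[F_n(i,a) \mid \F_n\big] = (H_w Q_n)(i,a) - (H_w Q^*)(i,a).
\]
By Lemma~\ref{l2}, $H_w$ is a max-norm contraction with factor $\beta := w\alpha + 1 - w$, and the discussion following Lemma~\ref{l2} together with Lemma~\ref{l4} shows $0 < \beta < 1$ (indeed $\beta \le \alpha$ when $w \ge 1$). Hence $\big\|\E[F_n \mid \F_n]\big\| \le \beta \|Q_n - Q^*\| = \beta \|\Delta_n\|$, which is the second hypothesis.

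For the third hypothesis I would bound the conditional variance. Since $F_n(i,a) - \E[F_n(i,a)\mid\F_n]$ is, for fixed $(i,a)$, an affine function of the bounded reward $r(i,a,j)$ (with $|r(i,a,j)| \le B$) and of $\max_b Q_n(j,b)$, and since $\|Q_n\| \le \|\Delta_n\| + \|Q^*\|$, a direct estimate using $(x+y)^2 \le 2x^2 + 2y^2$ yields $\mathbf{var}\big[F_n(i,a)\mid\F_n\big] \le C\big(1 + \|\Delta_n\|^2\big)$ for a constant $C$ depending only on $w$, $\alpha$, $B$ and $\|Q^*\|$. With all three hypotheses checked, the theorem gives $\Delta_n \to 0$ almost surely, i.e. $Q_n \to Q^*$ a.s. Finally, Lemma~\ref{l3} identifies $Q^*(i,c)$ with the optimal value $V^*(i)$ at the optimal action $c = \arg\max_{b} Q^*(i,b)$, so the limit delivers the optimal value function and a corresponding optimal policy.

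The one place that needs genuine care, rather than bookkeeping, is the conditional-expectation identity: one has to argue that the state-action pair active at step $n$ is $\F_n$-measurable and that, conditioned on $\F_n$, the next state $j$ has law $p(\cdot\mid i,a)$, so that the expectation of $F_n$ collapses exactly to $(H_wQ_n)(i,a) - Q^*(i,a)$. Once that is pinned down, everything else — the algebraic rearrangement into the $\Delta$-recursion, the contraction factor, and the affine variance bound — follows routinely from Lemmas~\ref{l2}–\ref{l4} and the boundedness of the rewards.
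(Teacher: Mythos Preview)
Your proposal is correct and follows essentially the same approach as the paper: define $\Delta_n=Q_n-Q^*$, identify $F_n$, compute $\E[F_n\mid\F_n]=(H_wQ_n)(i,a)-Q^*(i,a)$ and invoke the contraction of $H_w$ from Lemma~\ref{l2}, then bound the conditional variance via $\|Q_n\|\le\|Q^*\|+\|\Delta_n\|$ together with elementary inequalities of the form $(x+y)^2\le 2(x^2+y^2)$, and finally apply Theorem~1. The paper carries out the variance estimate more explicitly (using $(a+b+c)^2\le 3(a^2+b^2+c^2)$ to reach a concrete constant $C$), but the structure and the key ingredients are the same; your closing remark about Lemma~\ref{l3} is an optional gloss, since in the theorem ``optimal Q-values'' refers to the fixed point $Q^*$ of $H_w$.
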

\begin{proof}
Upon rewriting the update rule we have
\begin{align*}
    & Q_{n+1}(i,a)=\big(1-\g_n(i,a)\big)Q_{n}(i,a) \\ & \hspace{1cm}+\g_n(i,a)\Big{[}w\big{(}r(i,a,j)+\alpha \max_{b\in A}Q_{n}(i,b)\big{)} \\
    & \hspace{2cm}+(1-w)\max_{c \in A}Q_{n}(i,c)\Big{]}.
\end{align*}
Define $\Delta_{n}(i,a)=Q_{n}(i,a)-Q^*(i,a)$ and 
$\F_n=\sigma\big{(}\{Q_{0},\g_{j}, i_j, a_j \forall j \leq n, ~ n \geq 0\}\big{)}$ be the filtration.
The update rule of the algorithm can be written as
\begin{align*}
& \Delta_{n+1}(i_{n},a_{n})= \big(1-\g_n(i_{n},a_{n})\big)\Delta_{n}(i_{n},a_{n}) \\ 
& \hspace{0.6cm} +\g_n(i_{n},a_{n})\big{[}w \big(r_n+\alpha \max_{b\in A}Q_{n}(i_{n+1},b)-Q^*(i_{n},a_n)\big) \\ 
& \hspace{2cm}+(1-w)\big(\max_{c \in A}Q_{n}(i_{n},c)-Q^*(i_{n},a_n)\big)\big{]},
\end{align*}
where $r_n=r(i_{n},a_{n},i_{n+1})$. Let 
\begin{align*}
    & F_n(i,a)=w\big{(}r(i,a,\eta_{i,a})+\alpha \max_{b\in A}Q_{n}(\eta_{i,a},b)-Q^*(i,a)\big{)} \\ 
    & \hspace{1cm} +(1-w)\big(\max_{c \in A}Q_{n}(i,c)-Q^*(i,a)\big),
\end{align*}
where $\eta_{i,a}$ is a random variable having the distribution $p(. \mid i,a).$ We have
\begin{align*}
    &\E\big{[}F_{n}(i,a)|\F_{n}\big{]}\\ &=\displaystyle\sum_{j=1}^{M}p(j|i,a) \bigg[w(r(i,a,j)+\alpha \max_{b\in A}Q_{n}(j,b)-Q^*(i,a)) \\
    & \hspace{3cm} +(1-w)\big(\max_{c \in A}Q_{n}(i,c)-Q^*(i,a)\big)\bigg] \\
   &=(H_{w}Q_n)(i,a)-Q^*(i,a).
\end{align*}
Since $Q^*=H_{w}Q^*$ from Lemma \ref{l2}, we have
\begin{align*}
    \bigg{|}\E\big{[}F_{n}(i,a) \mid \F_{n}\big{]}\bigg{|} & \leq (\alpha w + 1-w)\|Q_{n}-Q^*\| \\
    & = (\alpha w + 1-w) \|\Delta_{n}\|.
\end{align*}
Finally,
\begin{align*}
  & \textbf{var} \big[F_n(i,a) \mid \F_{n}\big] \\
= ~ &\E \Bigg[ \bigg(w \big(r(i,a,\eta_{i,a})+\alpha \max_{b\in A}Q_{n}(\eta_{i,a},b)-Q^*(i,a)\big)\\
& \hspace{1cm}+(1-w) \big(\max_{c \in A}Q_{n}(i,c)-Q^*(i,a)\big) \\
& \hspace{2cm}-H_{w}Q_{n}(i,a)+Q^*(i,a)\bigg)^2 \Bigg]\\
= ~ &\E \Bigg[\bigg(w \big(r(i,a,\eta_{i,a})+\alpha \max_{b\in A}Q_{n}(\eta_{i,a},b)\big)\\
& \hspace{0.5cm}+(1-w) \big(\max_{c \in A}Q_{n}(i,c) \big)- H_{w}Q_{n}(i,a)\bigg)^2 \Bigg]\\
\leq ~ &\E \Bigg[ \bigg(w \big(r(i,a,\eta_{i,a})+\alpha \max_{b\in A}Q_{n}(\eta_{i,a},b)\big)\\
&\hspace{0.5cm}+(1-w) \big(\max_{c \in A}Q_{n}(i,c)\big)\bigg)^2 \Bigg] \\
\leq ~ & 3\bigg{(}w^2B^2+\alpha^2w^2\|Q_n\|^2+(1-w)^2\|Q_n\|^2\bigg{)}\\
\leq ~ & 3\bigg{(}w^2B^2+2 \big{(} \alpha^2w^2+(1-w)^2 \big{)} \big{(} \|Q^*\|^2+\|\Delta_n\|^2 \big{)} \bigg{)}\\
\leq ~ & C(1+\|\Delta_n\|^2),
\end{align*}
where $C=\max \bigg{\{}3w^2B^2+6 \big{(} \alpha^2w^2+(1-w)^2 \big{)} \|Q^*\|^2,6\big{(} \alpha^2w^2+(1-w)^2 \big{)} \bigg{\}}.$
Here the first inequality follows from the fact:
\begin{align*}
\E[Z-\E Z]^2=\E[Z^2]-\E[Z]^2 \leq \E[Z^2].    
\end{align*}
The second inequality follows from:
 \begin{align*}
     & |r(i,a,j)| \leq B, \\
     & \|v\|= \max_{i}{|v(i)|},\\
     & (a+b+c)^2\leq3(a^2+b^2+c^2) ~ \forall a,b,c.
 \end{align*} 
The third inequality from the properties:
 \begin{align*}
  & \forall a,b, ~ (a+b)^2\leq 2(a^2+b^2), \\
  & \text{as well as the triangle inequality of the norm. Hence,}
 \end{align*} 
$$\textbf{var}[F_n(i,a) \mid \F_n]\leq C(1+\|\Delta_n\|^2).$$ 
Therefore by Theorem 1, $\Delta_{n}$ converges to zero with probability 1 i.e., SOR Q-learning iterates $Q_n$ converges to $Q^*$ almost surely.
\end{proof}
\vspace{0.5cm}

\section{Experiments and Results}
In this section, we present the experimental evaluation of our proposed algorithm. First we numerically establish the convergence of our algorithm to the optimal value function. Next, we show the comparison between SOR Q-learning and standard Q-learning when we select the optimal $w^{*}$ (refer Section IV). Finally, we show the comparison between various feasible $w$ values that can be used in our algorithm. 

For our experiments, we construct $100$ independent and random MDPs with $10$ states and $5$ actions that satisfy the assumption i.e. $p(i|i,a)>0, ~ \forall ~ (i,a)$. Note that this condition makes sure that $w^{*} >1$ which in turn ensures that the contraction factor of $H_{w}$ is strictly less than $\alpha$ (refer Lemma \ref{l4}). However any $0 < w \leq w^*$ ensures convergence of SOR Q-learning algorithm.  We use python MDP toolbox \cite{pymdp} to generate the MDPs. For both SOR Q-learning and standard Q-learning algorithms, we maintain the same step-size and run the algorithms for the same number of iterations. Implementation of our SOR Q-learning is available here \footnote{\url{https://github.com/raghudiddigi/SOR-Q-Learning}}.

In Figure \ref{exp1-fig}, we plot the average error as a function of number of iterations. Average error is calculated as follows. For each of 100 runs, we collect the error between the optimal value function and the Q-value estimate at every iteration. Then, the average error is calculated as the mean of the errors collected, i.e., average error at iteration $k$ is 
\begin{align}\label{comp-eqn}
    e(k)=\frac{1}{100}\sum_{m=1}^{100} || V_{m}^{*} - \max_{a}Q_{m}^{k}(.,a)||,
\end{align}
where $V_m^{*}$ is the optimal value function of the $m^{th}$ MDP and $Q_{m}^{k}$ is the Q-value estimate of the $m^{th}$ MDP at iteration $k$.
We can see that, in Figure \ref{exp1-fig}, $e(k)$ decreases as the number of iterations increase. 

\begin{figure}
    \centering
    \includegraphics[scale = 0.4]{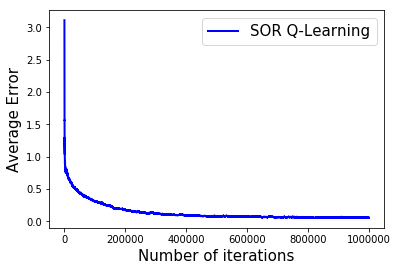}
    \caption{Convergence of SOR Q-learning}
    \label{exp1-fig}
\end{figure}

In Figure \ref{exp2-fig}, we show the comparison between SOR Q-learning and the standard Q-learning over $10^{5}$ iterations. In this experiment, we select optimal $w^{*}$ for our SOR Q-learning. We can see that the average error for SOR Q-learning is less than that of standard Q-learning during the learning process. In Table \ref{exp2-table}, we show the performance of the converged policies in both the cases. Here average error is $e(10^5)$ (refer equation \eqref{comp-eqn}) and average policy difference is computed as the mean of the difference between converged policy and optimal policy for these $100$ MDPs. We observe that, on average, SOR Q-learning gives lower error and a better policy compared to the standard Q-learning. 

\begin{figure}
    \centering
    \includegraphics[scale = 0.4]{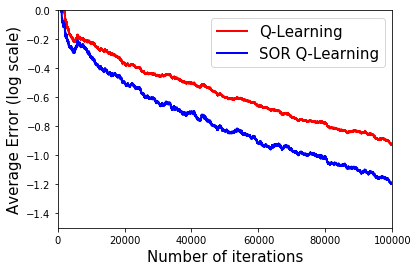}
    \caption{Performance of algorithms as learning progress}
    \label{exp2-fig}
\end{figure}

\begin{table}[H]
\begin{center}
\begin{tabular}{|c|c|c|}
\hline
\textbf{Algorithm} & \textbf{\begin{tabular}[c]{@{}c@{}}Average \\ Error\end{tabular}} & \textbf{\begin{tabular}[c]{@{}c@{}}Average \\ Policy Difference\end{tabular}} \\ \hline
SOR Q-Learning     & 0.3032                                                                      & 0.95                                                                            \\ \hline
Q-Learning         & 0.3962                                                                      & 0.97                                                                            \\ \hline
\end{tabular}
\end{center}
\caption{Performance of converged policies}
\label{exp2-table}
\end{table}
Note that in the above experiment, we have selected optimal $w^{*}$ in SOR Q-learning. However, in Section IV, we showed that any $w$ satisfying $1 < w \leq w^{*}$ would suffice for faster convergence than standard Q-learning. In Figure \ref{exp3-fig}, we show the performance of SOR Q-learning for different $w$ values.
We can see that the performance improves as $w$ increases from $1$ to $w^{*}$. Note that any feasible value  of $w > 1$ performs better than $w = 1$ case, which corresponds to the Q-learning algorithm.

\begin{figure}[h!]
    \centering
    \includegraphics[scale = 0.4]{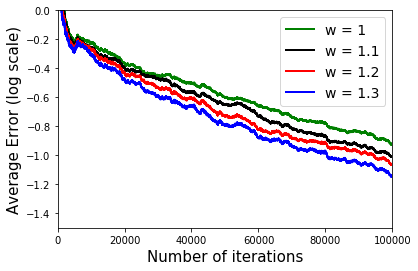}
    \caption{Performance of SOR Q-learning for different $w$-values}
    \label{exp3-fig}
\end{figure}


\section{Conclusions and Future Work}
In this work, we proposed SOR Q-learning, a generalization of Q-learning that makes use of the concept of Successive Over-Relaxation. We showed that the contraction factor of SOR Q-Bellman operator is less than or equal to $\alpha$, which is the contraction factor of standard Q-Bellman operator. We then proved the convergence of SOR Q-learning iterates to the SOR Q-values. Finally, we numerically established that, on average, SOR Q-learning learns the optimal value function faster than standard Q-learning. 
In future, we would like to extend the concept of SOR to the average cost and risk-sensitive MDPs \cite{huang2017risk}. As with the Q-learning algorithm \cite{even2003learning} it would also be interesting to derive the rate of convergence of SOR Q-learning. 


\bibliographystyle{IEEEtran}
\bibliography{references}

\end{document}